\documentclass{article}
\usepackage{spconf,amsmath,graphicx,hyperref}
\usepackage{xcolor}

\usepackage[ruled,vlined]{algorithm2e}
\usepackage{bbm}
\usepackage{amsthm}
\usepackage{amsfonts}
\usepackage{booktabs}
\usepackage{moresize}

\newtheorem{definition}{Definition}

\newtheorem{lemma}{Lemma}
\theoremstyle{definition}
\newtheorem{remark}{Remark}
\newtheorem{theorem}{Theorem}
\newtheorem{assumption}{Assumption}

\newcommand{\fix}[1]{}

\def\approach{SpiKFAX}

\title{On the second-order optimization for spiking neural networks}
\twoauthors
 {Ngoc Phu Doan}
	{Centre for Secure Information Technologies\\Queen's University Belfast
    }
 {Ihsen Alouani}
	{Centre for Secure Information Technologies\\Queen's University Belfast
    }
\begin{document}
%
\maketitle

\begin{abstract}
Spiking Neural Networks (SNNs) offer an energy-efficient alternative to conventional neural networks by exploiting sparse, binary spikes, and event-driven computation. However, the training of SNNs remains challenging, as spiking activations create a sharp loss landscape that hinders training, and diagonal-curvature optimizers such as the Adam family may fail to capture this geometry. The extension of curvature-based optimization methods to SNNs is further complicated by the sparse, discrete, and temporally recurrent nature of their underlying dynamics. To address these limitations, we propose \approach, a second-order optimization method that formulates a computationally tractable, Kronecker-factored approximation of the Fisher information matrix specifically adapted to the structure of SNNs. Empirical evaluation across five architectures and seven datasets demonstrates that \approach~ consistently yields improvements in test accuracy and training stability relative to other popular optimizers.

\end{abstract}
\begin{keywords}
Spiking Neural Network, second-order optimization, Fisher Information Matrix
\end{keywords}
\section{Introduction}
\label{sec:introduction}
Spiking Neural Networks (SNNs) have emerged as an energy-efficient alternative to conventional artificial neural networks (ANNs), owing to their sparse, event-driven computation instead of continuous multiplication operations ~\cite{rathi2021exploring}. However, training SNNs remains challenging: their non-differentiable spiking activation forces the loss landscape into a sharper, more irregular geometry than that of ANNs~\cite{kang2026a2sg,deng2022temporal}. As illustrated in Fig. \ref{fig:loss_landscape}, the loss landscape of a spiking VGG11 (S-VGG11) exhibits markedly higher curvature than its ANN counterpart under the same weight perturbation, making first-order optimizers prone to slow convergence and poor generalization.

Adaptive approaches such as Adam try to mitigate this by maintaining a second-moment estimate of the gradient, which can be interpreted as a diagonal approximation of the curvature~\cite{gomes2025adafisher}. However, this diagonal approximation discards the off-diagonal curvature information that captures interactions between parameters, which is particularly informative in the sharp, spiking-induced loss landscapes described above. As a result, Adam-family optimizers can converge slowly and settle at weaker generalization points when applied to SNNs.

\begin{figure}[t]
    \centering
    \includegraphics[width=\linewidth]{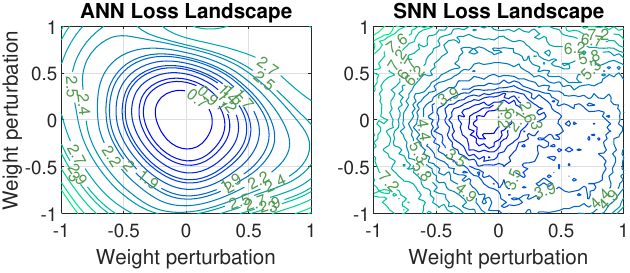}
    \vspace{-6mm}\caption{Loss landscape sharpness between VGG11 and S-VGG11 on the CIFAR10 dataset.}\vspace{-4mm}
    \label{fig:loss_landscape}
\end{figure}
 
A natural remedy is to incorporate second-order curvature information directly into the optimizer. The Hessian matrix, or its Fisher-information surrogate, has been approximated in various ways to precondition gradient updates in deep networks~\cite{gomes2025adafisher,martens2015optimizing}. However, extending these approximations to SNNs is not straightforward: SNNs are sparse in their spike activity, discrete in their output, and time-recurrent in their membrane dynamics, none of which are properties that curvature approximations designed for static, feedforward networks account for. Meanwhile, several optimization algorithms have been proposed specifically for SNNs, targeting complementary aspects of training such as objective-function design, sparsity-enforcing updates, and sharpness-aware minimization to seek flatter minima~\cite{windhager2026linearized,nicholson2026sharpnessawaresurrogatetrainingonsensor,saponati2025feedback}. Yet, to the best of our knowledge, none of these methods exploit curvature information through a Kronecker-factored approximation of the Fisher matrix, leaving a gap between the second-order optimization literature and SNN-specific training methods.

\noindent\textbf{Contributions:}
\begin{itemize}
\item We introduce \textbf{\approach}, an adaptive optimization approach motivated by curvature approximation, that captures second-order information in the sharp loss landscape of SNNs.
\item We propose a computable solution to approximate the Fisher Information Matrix for SNNs, deriving a Kronecker-factored form that accounts for SNNs' time-recurrent, surrogate-gradient-based dynamics while remaining tractable at scale.
\item We conduct extensive experiments across five model architectures and seven datasets, spanning both static and neuromorphic vision benchmarks, demonstrating that \approach~ consistently improves accuracy and training stability over SGD, Adam, and AdamW.
\end{itemize}

\section{Methodology}
\label{sec:method}
\noindent\textbf{Problem formulation.} Given the SNN's parameter $\theta$, the data distribution $\mathcal D$, and the loss function $l$, we minimize $\mathcal{L}(\theta) = \mathbb{E}_{(x,y)\sim\mathcal{D}}[\ell(f_\theta(x),y)]$, a non-convex, non-differentiable objective due to the spiking nonlinearity. We seek an approximate stationary point $\theta$ s.t.
$\mathbb{E}[\|\nabla\mathcal{L}_\sigma(\theta)\|^2]\le\epsilon^2$ via the
preconditioned update $\theta_{k+1}^{(\ell)} = \theta_k^{(\ell)} - \gamma P_k^{(\ell)} g^{(\ell)}(\theta_k)$, where $P_k^{(\ell)}$ is the inverse curvature approximated by our proposed algorithm (c.f. Lemma \ref{lem:fim-approx}), $g$ denotes the gradient.

\noindent\textbf{Spiking Neural Network (SNN).} Given an input spike $x_t$ with timestamp $t\in[1,\tau]$, a spiking layer outputs a sequence of spikes $s_t\in\{0,1\}$. We use the leaky integrate-and-fire neuron with a subtractive (soft) reset:
\begin{equation}
\label{eq:snn_def}
\left\{
\begin{aligned}
u_{t} &= \beta u_{t-1} + Wx_t - V_{thr}s_{t-1} \\
s_t &= \Theta\Big(u_t-V_{thr}\Big)
\end{aligned}
\right.
\end{equation}
where $V_{thr}$ is the firing threshold, $\beta\in[0,1]$ the membrane decay, $W$ the learnable synaptic weight, and $\Theta$ the Heaviside step function, with $u_0=0$ and $s_0=0$.
\fix{The recurrence was written $u_{t+1}=\beta u_t+Wx_t-V_{thr}s_t$, i.e.\ the input $x_t$ entered the membrane one step \emph{after} its own index. Under that convention $\partial\mathcal{L}/\partial W=\sum_t\delta_{t+1}x_t^{\top}$, which contradicted the expression $\sum_t\delta_tx_t^{\top}$ used later and in the algorithm. Re-indexing as above makes the two agree.}

$\Theta$ is non-differentiable, so backpropagation is not directly possible. The standard remedy is the surrogate gradient: $\partial s_t/\partial u_t$ is replaced in the backward pass by $\sigma'(u_t-V_{thr})$ for a differentiable surrogate $\sigma$, e.g.\ superspike~\cite{zenke2018superspike}, arctangent~\cite{stan2024learning}, or sigmoid. All derivatives below are therefore surrogate derivatives, and every $\approx$ that follows from this substitution is written explicitly.

\begin{lemma}[Recurrent Jacobian, surrogate form]
\label{lemma:jacobian}
Under the surrogate substitution, the derivative of $u_t$ with respect to $u_{t-1}$ is
\begin{equation}
    \frac{\partial u_{t}}{\partial u_{t-1}}\approx \beta - V_{thr}\,\sigma'(u_{t-1}-V_{thr}).
\end{equation}
\end{lemma}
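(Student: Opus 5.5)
The plan is to differentiate the membrane recurrence in \eqref{eq:snn_def} directly, treating $u_t$ as a function of $u_{t-1}$ through every path in which $u_{t-1}$ enters. From the first line of \eqref{eq:snn_def}, $u_t = \beta u_{t-1} + Wx_t - V_{thr}s_{t-1}$. The input $x_t$ is an external spike train, and the weight $W$ is fixed within the forward pass. Hence the term $Wx_t$ does not depend on $u_{t-1}$, and its derivative vanishes. The remaining dependence is of two kinds. The \emph{leak path} $\beta u_{t-1}$ contributes exactly $\beta$. The \emph{reset path} $-V_{thr}s_{t-1}$ depends on $u_{t-1}$ because, by the second line of \eqref{eq:snn_def} at index $t-1$, $s_{t-1}=\Theta(u_{t-1}-V_{thr})$.

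I would then apply the chain rule to the reset path:
\begin{equation*}
\frac{\partial u_t}{\partial u_{t-1}} = \beta - V_{thr}\,\frac{\partial s_{t-1}}{\partial u_{t-1}}.
\end{equation*}
This identity holds whenever the right-hand derivative is defined. Formally, $\partial s_{t-1}/\partial u_{t-1}=\Theta'(u_{t-1}-V_{thr})$, which is zero almost everywhere and undefined (a Dirac mass) at threshold.

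At this point I would invoke the surrogate substitution stated just before the lemma: in the backward pass, $\partial s_{t}/\partial u_{t}$ is replaced by $\sigma'(u_{t}-V_{thr})$, here applied at index $t-1$. Substituting gives $\beta - V_{thr}\,\sigma'(u_{t-1}-V_{thr})$. Following the paper's convention, this replacement is the sole source of the $\approx$, so I would mark the final equality accordingly.

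The only delicate step is justifying that no other dependence of $u_t$ on $u_{t-1}$ is missed. The argument needs three facts:
\begin{itemize}
\item $x_t$ is exogenous (for a deeper layer it is the previous layer's output at time $t$, which does not depend on this layer's $u_{t-1}$);
\item $s_{t-1}$ depends on $u_{t-1}$ only through the Heaviside function;
\item earlier states enter $u_t$ only via $u_{t-1}$, so the partial derivative is with respect to the immediate predecessor.
\end{itemize}
I would also remark that the surrogate replaces a derivative which is almost everywhere zero. The approximation is therefore a design choice rather than a limit statement, and it is exact in the sense of defining the backward pass actually used in training.
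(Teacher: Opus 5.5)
Your proposal is correct and matches the paper's implicit argument. You differentiate $u_t=\beta u_{t-1}+Wx_t-V_{thr}s_{t-1}$: the leak path gives $\beta$, the reset path gives $-V_{thr}\,\partial s_{t-1}/\partial u_{t-1}$, and replacing $\Theta'$ by the surrogate $\sigma'(u_{t-1}-V_{thr})$ is the sole source of the $\approx$. The paper states the lemma without a separate proof, so your remarks on the exogeneity of $x_t$ and the almost-everywhere-zero true derivative are a faithful elaboration rather than a different route.
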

\fix{Stated as an exact identity (``The deriviate ... is'') although it is exact only for the true Jacobian of a network whose reset path is kept attached; with $\Theta$ replaced by $\sigma$ it is an approximation, and it must be flagged as such since the whole Fisher construction inherits it.}

\begin{lemma}[Backpropagation through time]
\label{lemma:bptt}
Let $\delta_t:=\partial\mathcal{L}/\partial u_t$. With the terminal condition $\delta_{\tau+1}=0$,
\begin{equation}
\label{equ:backpropagation}
\delta_t =
\underbrace{
\frac{\partial \mathcal{L}}{\partial s_t}
\sigma'(u_t - V_{thr})
}_{\text{direct path (this timestep's output)}}
+
\underbrace{
\delta_{t+1}
\left(
\beta - V_{thr}\sigma'(u_t - V_{thr})
\right)
}_{\text{recurrent path (future timesteps)}}
\end{equation}
where $\partial\mathcal{L}/\partial s_t$ aggregates the contributions of \emph{all} downstream layers that consume $s_t$ at time $t$.
\end{lemma}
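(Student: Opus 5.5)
We obtain the recursion by applying the multivariate chain rule to the computational graph of a single spiking layer, unrolled over time. Under the dynamics of Eq.~\eqref{eq:snn_def}, $u_t$ influences $\mathcal{L}$ through exactly two children in the unrolled graph. The first is the spike $s_t=\Theta(u_t-V_{thr})$, which is consumed at time $t$ by every downstream layer. The second is the next membrane potential $u_{t+1}$, which depends on $u_t$ both through the leak term $\beta u_t$ and through the reset term $-V_{thr}s_t$. Since $W x_{t+1}$ does not depend on $u_t$, no other path exists. The plan is to write $\delta_t$ as the sum of the derivatives along these two paths, taking care not to count the spike-mediated reset path twice.

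\textbf{Splitting $\delta_t$.} To avoid double counting, we separate the dependence of $u_{t+1}$ on $u_t$ into its direct leak part and its reset part, which passes through $s_t$. The total derivative is
\[
\delta_t=\frac{\partial \mathcal{L}}{\partial s_t}\Big|_{\text{direct}}\frac{\partial s_t}{\partial u_t}+\delta_{t+1}\,\frac{d u_{t+1}}{d u_t},
\]
where $\partial\mathcal{L}/\partial s_t|_{\text{direct}}$ collects only the contributions of the downstream layers that read $s_t$ at time $t$. This is exactly the quantity the statement calls $\partial\mathcal{L}/\partial s_t$. The reset contribution of $s_t$ is routed instead into the total Jacobian $du_{t+1}/du_t$.

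\textbf{Evaluating the factors.} By Lemma~\ref{lemma:jacobian}, with its index shifted by one, $du_{t+1}/du_t\approx\beta-V_{thr}\sigma'(u_t-V_{thr})$. The surrogate substitution gives $\partial s_t/\partial u_t\approx\sigma'(u_t-V_{thr})$. Substituting both into the split yields the direct term and the recurrent term of \eqref{equ:backpropagation}.

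\textbf{Boundary case and induction.} At $t=\tau$ there is no $u_{\tau+1}$ in the graph, so only the direct path survives. This matches the formula under the convention $\delta_{\tau+1}=0$. The recursion therefore holds for $t=\tau$ and, by backward induction, for all $t=\tau-1,\dots,1$. For vector-valued layers the same argument goes through with elementwise products, since the leak and the reset act neuron-wise and the Jacobian is diagonal.

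The main obstacle is the bookkeeping in the splitting step. The argument must state precisely which part of $\partial\mathcal{L}/\partial s_t$ enters the direct term, namely the downstream consumers at time $t$ only, while the reset dependence $u_{t+1}\leftarrow s_t$ is absorbed into the Jacobian of Lemma~\ref{lemma:jacobian}. Otherwise the term $-V_{thr}\sigma'$ would appear twice. The plan handles this by defining $\partial\mathcal{L}/\partial s_t$ as a partial derivative that holds $u_{t+1}$ fixed. We also note that every equality is only an approximation inherited from the surrogate substitution.
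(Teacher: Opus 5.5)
Your proposal is correct and follows the paper's proof. Both use the two-child chain rule at $u_t$, the surrogate replacement $\partial s_t/\partial u_t\approx\sigma'(u_t-V_{thr})$, Lemma~\ref{lemma:jacobian} (index-shifted) for the full Jacobian $\partial u_{t+1}/\partial u_t$, and $\delta_{\tau+1}=0$ because no $u_{\tau+1}$ exists; your explicit bookkeeping---taking $\partial\mathcal{L}/\partial s_t$ with $u_{t+1}$ held fixed, so the reset path $-V_{thr}s_t$ enters only through that Jacobian---spells out the split the paper's one-line chain rule uses implicitly, and the backward induction is harmless but unnecessary, since the identity holds separately at each $t$.
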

\fix{Two omissions: the terminal condition $\delta_{\tau+1}=0$ was missing, so the recursion was not well posed; and $\partial\mathcal{L}/\partial s_t$ was left undefined, which matters because in a multi-layer network it is not the derivative of the output loss alone.}

\begin{proof}
$u_t$ influences $\mathcal{L}$ only through the spike $s_t$ it emits and through the next membrane state $u_{t+1}$, so the chain rule gives $\delta_t=\frac{\partial\mathcal{L}}{\partial s_t}\frac{\partial s_t}{\partial u_t}+\delta_{t+1}\frac{\partial u_{t+1}}{\partial u_t}$. Substituting $\partial s_t/\partial u_t\approx\sigma'(u_t-V_{thr})$ and Lemma~\ref{lemma:jacobian} yields \eqref{equ:backpropagation}. At $t=\tau$ there is no future state, hence $\delta_{\tau+1}=0$.
\end{proof}

Because $W$ is shared across timesteps, $\frac{\partial \mathcal{L}}{\partial W}=\sum_{t=1}^{\tau}\delta_tx_t^\top$. We write $g_t:=\delta_tx_t^\top$ for the per-timestep contribution, so that $\nabla_W\mathcal{L}=\sum_{t=1}^\tau g_t$.

\begin{table*}[t]
    \centering
    \caption{Performance comparison of different optimizers across neuromorphic datasets. All results are reported as mean $\pm$ std.}
    \label{tab:neuromorphic_results}

    \footnotesize
    \setlength{\tabcolsep}{2.4pt}
    \renewcommand{\arraystretch}{1.0}

    \begin{tabular}{lcccccccccccc}
        \toprule

        & \multicolumn{4}{c}{\textbf{N-MNIST}}
        & \multicolumn{4}{c}{\textbf{CIFAR10-DVS}}
        & \multicolumn{4}{c}{\textbf{DVS128 Gesture}} \\

        \cmidrule(lr){2-5}
        \cmidrule(lr){6-9}
        \cmidrule(lr){10-13}

        \textbf{Architecture}
        & \textbf{SGD} & \textbf{Adam} & \textbf{AdamW} & \textbf{\approach}
        & \textbf{SGD} & \textbf{Adam} & \textbf{AdamW} & \textbf{\approach}
        & \textbf{SGD} & \textbf{Adam} & \textbf{AdamW} & \textbf{\approach} \\

        \midrule
        
        S-LeNet5
        & $92.66_{1.27}$ & $93.76_{1.32}$ & $93.97_{0.63}$ & $\textbf{98.24}_{0.11}$
        & $16.6_{0.0}$ & $40.6_{0.6}$ & $42.7_{0.3}$ & $\textbf{46.2}_{1.14}$
        & $50.37_{1.3}$ & $67.42_{0.76}$ & $69.7_{0.38}$ & $\textbf{76.13}_{1.89}$ \\

        S-VGG11
        & $96.06_{0.74}$ & $96.93_{0.55}$ & $97.35_{0.21}$ & $\textbf{99.22}_{0.01}$
        & $50.7_{1.25}$ & $47.9_{3.2}$ & $43.15_{0.25}$ & $\textbf{56.15}_{0.85}$
        & $53.78_{2.27}$ & $52.84_{0.19}$ & $46.4_{0.95}$ & $\textbf{71.78}_{0.57}$ \\

        S-VGG16
        & $97.13_{0.32}$ & $95.57_{0.45}$ & $96.68_{0.12}$ & $\textbf{98.49}_{0.47}$
        & $45.8_{1.0}$ & $29.0_{3.8}$ & $31.95_{10.05}$ & $\textbf{50.5}_{1.3}$
        & $55.11_{0.19}$ & $51.89_{3.79}$ & $54.92_{3.78}$ & $\textbf{67.05}_{0.76}$ \\

        S-ResNet18
        & $50.39_{9.14}$ & $89.32_{1.03}$ & $93.43_{0.89}$ & $\textbf{99.13}_{0.16}$
        & $31.3_{2.8}$ & $29.75_{3.35}$ & $31.7_{0.4}$ & $\textbf{46.95}_{2.75}$
        & $43.18_{0.39}$ & $49.24_{1.5}$ & $48.86_{4.2}$ & $\textbf{70.27}_{2.46}$ \\

        \bottomrule
    \end{tabular}
\end{table*}

\noindent\textbf{SNN-specific Kronecker factorization.} We introduce \textbf{\approach}, a second-order optimizer for SNNs. The idea is to precondition the gradient with the Fisher information matrix, which captures the curvature of the loss landscape. Computing it exactly is intractable: the matrix is $N\times N$ and its inversion costs $O(N^3)$, with $N$ the number of parameters. We therefore build a Kronecker-factored approximation adapted to the temporal structure of SNNs.

\begin{definition}[Fisher information matrix]
\label{def:fisher}
For a model with predictive distribution $p_\theta(y\,|\,x)$,
\begin{equation}
F_W=\mathbb{E}_{x\sim\mathcal{D}}\;\mathbb{E}_{\hat y\sim p_\theta(\cdot|x)}
\Big[\operatorname{vec}(\nabla_W\log p_\theta(\hat y|x))\operatorname{vec}(\nabla_W\log p_\theta(\hat y|x))^{\!\top}\Big].
\end{equation}
Replacing $\hat y\sim p_\theta(\cdot|x)$ by the observed label $y$ yields the \emph{empirical} Fisher, a different matrix that does not converge to the Fisher (nor to the Gauss--Newton matrix) away from a zero-residual optimum.
\end{definition}
\fix{$F_W$ was defined as $\mathbb{E}[(\sum_t\operatorname{vec}g_t)(\sum_t\operatorname{vec}g_t)^{\top}]$ with $g_t$ taken from the \emph{training loss at the true label}. That is the empirical Fisher, not the Fisher, so ``approximating the Fisher information matrix'' was claiming more than the derivation delivered. Either sample $\hat y$ from the model (Alg.~\ref{alg:pseudo-code}) or rename the quantity the empirical Fisher throughout, and say which one the released code computes.}

\begin{definition}[Kronecker product] For $A\in\mathbb{R}^{m\times n}$ and $B$ of arbitrary dimensions,

\centering
$A \otimes B =
\begin{bmatrix}
[A]_{1,1} B & \cdots & [A]_{1,n} B \\
\vdots & \ddots & \vdots \\
[A]_{m,1} B & \cdots & [A]_{m,n} B
\end{bmatrix}$
\end{definition}

\begin{definition}[Vectorize operator]
$\operatorname{vec}(X)$ stacks the columns of $X\in\mathbb{R}^{m\times n}$ into a vector of $\mathbb{R}^{mn}$.
\end{definition}

With column-major $\operatorname{vec}$, the following are exact: $\operatorname{vec}(g_t)=\operatorname{vec}(\delta_tx_t^\top)=x_t\otimes\delta_t$, and $\operatorname{vec}(g_t)\operatorname{vec}(g_s)^\top=(x_tx_s^\top)\otimes(\delta_t\delta_s^\top)$.

The Kronecker-factored form rests on three approximations, which we state separately rather than fold into the derivation.

\begin{assumption}[Layer-wise block diagonality]
\label{as:block}
$F$ is approximated by its block-diagonal part over layers, i.e.\ cross-layer parameter interactions are discarded~\cite{koroko2023efficient, sweeney2026geometry}.
\end{assumption}

\begin{assumption}[Independent activations and derivatives]
\label{as:iad}
$\mathbb{E}[(x_tx_s^\top)\otimes(\delta_t\delta_s^\top)]\approx\mathbb{E}[x_tx_s^\top]\otimes\mathbb{E}[\delta_t\delta_s^\top]$ for all $t,s$.
\end{assumption}
\fix{The proof asserted ``assuming $x_t\perp\delta_s$'' and then wrote an \emph{equality}. Statistical independence of activations and backpropagated errors is false in any trained network ($\delta$ is a function of $x$ through the forward pass); this is the standard IAD \emph{approximation} of K-FAC and the derivation must carry $\approx$, not $=$, from here on.}

\begin{assumption}[Temporal homogeneity of the input second moment]
\label{as:rate}
$\mathbb{E}[x_tx_s^\top]\approx A$ for all $t,s$, where $A$ is the pairwise average
\begin{equation}
A:=\frac{1}{\tau^2}\sum_{t,s=1}^{\tau}\mathbb{E}[x_tx_s^\top]=\mathbb{E}\big[rr^\top\big],\qquad r:=\frac{1}{\tau}\sum_{t=1}^{\tau}x_t,
\end{equation}
i.e.\ the curvature sees the spike train through its \emph{rate} vector $r$ rather than through its per-timestep correlation structure.
\end{assumption}

\fix{This was justified by ``$x_t$ is i.i.d.\ across timesteps, hence $\mathbb{E}[x_tx_s^\top]=A$ for all $t,s$''. Both halves fail. (i) Spike trains are not i.i.d.\ over $t$: encoders and the membrane recurrence induce temporal correlation, which is the property the paper elsewhere claims to exploit. (ii) Even for genuinely i.i.d.\ $x_t$ the conclusion is false: $\mathbb{E}[x_tx_t^\top]=\Sigma+\mu\mu^\top$ while $\mathbb{E}[x_tx_s^\top]=\mu\mu^\top$ for $t\neq s$, so diagonal and off-diagonal blocks differ by $\Sigma$. Constancy over $(t,s)$ is an independent modelling approximation, and the only definition of $A$ consistent with the estimator actually used in the algorithm is the pairwise average above.}

\begin{lemma}[Kronecker-factored Fisher for a shared weight]
\label{lem:fim-approx}
Let $W$ be shared across $t=1,\dots,\tau$, with $g_t=\delta_tx_t^\top$. Under Assumptions~\ref{as:block}--\ref{as:rate},
\begin{equation}
    F_W \approx A\otimes G,\quad
    A = \mathbb{E}[rr^\top],\quad
    G = \mathbb{E}\!\Big[\big(\textstyle\sum_{t=1}^\tau\delta_t\big)\big(\textstyle\sum_{t=1}^\tau\delta_t\big)^{\!\top}\Big],
\end{equation}
and, whenever $A$, $G$ are non-singular, $(A\otimes G)^{-1}=A^{-1}\otimes G^{-1}$, so the preconditioned update is $\Delta W=-\gamma\,G^{-1}(\nabla_W\mathcal{L})A^{-1}$.
\end{lemma}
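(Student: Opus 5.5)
The plan is to expand the Fisher block for $W$ as a double sum over timestep pairs, apply the three assumptions in sequence, and then read off the two Kronecker factors. First, by Assumption~\ref{as:block} we may work with the $W$-block $F_W$ alone. The score $\nabla_W\log p_\theta(\hat y|x)$ has the same structure as $\nabla_W\mathcal{L}$, with $\delta_t$ now computed by Lemma~\ref{lemma:bptt} from the sampled label $\hat y$. So $\operatorname{vec}(\nabla_W\log p)=\sum_{t=1}^\tau \operatorname{vec}(g_t)=\sum_t x_t\otimes\delta_t$. Multiplying out the outer product and using the exact identity $\operatorname{vec}(g_t)\operatorname{vec}(g_s)^\top=(x_tx_s^\top)\otimes(\delta_t\delta_s^\top)$ together with linearity of expectation gives
\begin{equation*}
F_W=\sum_{t,s=1}^{\tau}\mathbb{E}\big[(x_tx_s^\top)\otimes(\delta_t\delta_s^\top)\big].
\end{equation*}
This step is exact. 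Note that the expectation over $\hat y$ affects only the $\delta$'s.

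Next, apply Assumption~\ref{as:iad} termwise to obtain $F_W\approx\sum_{t,s}\mathbb{E}[x_tx_s^\top]\otimes\mathbb{E}[\delta_t\delta_s^\top]$. Then apply Assumption~\ref{as:rate} to replace every $\mathbb{E}[x_tx_s^\top]$ by the common matrix $A$. Bilinearity of $\otimes$ now lets us pull $A$ out of the sum:
\begin{equation*}
F_W\approx A\otimes\sum_{t,s}\mathbb{E}[\delta_t\delta_s^\top]=A\otimes\mathbb{E}\Big[\big(\textstyle\sum_t\delta_t\big)\big(\textstyle\sum_t\delta_t\big)^{\!\top}\Big]=A\otimes G.
\end{equation*}
The identity $A=\mathbb{E}[rr^\top]$ is immediate from expanding $rr^\top=\tau^{-2}\sum_{t,s}x_tx_s^\top$, so the two forms of $A$ in Assumption~\ref{as:rate} agree.

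The main point to handle carefully is the order in which the assumptions are invoked. Assumption~\ref{as:rate} must be applied after factorising each $(t,s)$ term, not to the unfactorised sum. Otherwise $A$ cannot be pulled out, because the cross terms $t\neq s$ carry different $\delta_t\delta_s^\top$ weights. I would also state explicitly that no normalisation issue arises. The $\tau^{-2}$ in $A$ is absorbed by the fact that $G$ uses unnormalised sums, and $\tau^{-2}\sum_{t,s}\mathbb{E}[x_tx_s^\top]\cdot\sum_{t,s}$ matches the count of $\tau^2$ terms only through the constancy assumption itself. Concretely, $\sum_{t,s}A\otimes\mathbb{E}[\delta_t\delta_s^\top]$ has no extra factor, so the product $A\otimes G$ is correctly scaled.

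Finally, the inverse and update follow from standard Kronecker algebra. The mixed-product rule $(A\otimes G)(A^{-1}\otimes G^{-1})=I\otimes I$ gives $(A\otimes G)^{-1}=A^{-1}\otimes G^{-1}$ when both factors are invertible. Combining this with $(B^\top\otimes C)\operatorname{vec}(X)=\operatorname{vec}(CXB)$ and the symmetry of $A$ gives $(A^{-1}\otimes G^{-1})\operatorname{vec}(\nabla_W\mathcal{L})=\operatorname{vec}(G^{-1}\nabla_W\mathcal{L}\,A^{-1})$. Un-vectorising the preconditioned step $-\gamma P\,g$ then yields $\Delta W=-\gamma G^{-1}(\nabla_W\mathcal{L})A^{-1}$.
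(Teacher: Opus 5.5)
Your proposal is correct and matches the paper's proof essentially step for step. It expands $F_W$ exactly via the mixed-product identity, applies Assumption~\ref{as:iad} termwise, then invokes Assumption~\ref{as:rate} so the left factor is constant before pulling the double sum through $\otimes$, and finally uses the $\operatorname{vec}(BXC)=(C^\top\otimes B)\operatorname{vec}(X)$ identity with symmetric $A,G$ to get the update. The only blemish is the middle sentence of your normalisation paragraph, which is garbled, but its concrete conclusion is right: each $\mathbb{E}[x_tx_s^\top]$ is replaced by $A$ itself, not $\tau^2A$, so $A\otimes G$ carries no extra factor.
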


\begin{proof}
By Definition~\ref{def:fisher} and $\nabla_W\log p_\theta=\sum_t g_t$, $F_W=\sum_{t,s=1}^\tau\mathbb{E}[\operatorname{vec}(g_t)\operatorname{vec}(g_s)^\top]=\sum_{t,s=1}^\tau\mathbb{E}[(x_tx_s^\top)\otimes(\delta_t\delta_s^\top)]$, the last step being the exact mixed-product identity. Assumption~\ref{as:iad} gives $F_W\approx\sum_{t,s}\mathbb{E}[x_tx_s^\top]\otimes\mathbb{E}[\delta_t\delta_s^\top]$. Only now, with the left factor constant in $(t,s)$ by Assumption~\ref{as:rate}, may the double sum be pulled through $\otimes$ by bilinearity:
\begin{equation}
F_W\approx A\otimes\!\!\sum_{t,s=1}^\tau\!\mathbb{E}[\delta_t\delta_s^\top]=A\otimes\mathbb{E}\big[(\textstyle\sum_t\delta_t)(\textstyle\sum_t\delta_t)^\top\big].
\end{equation}
Finally $\operatorname{vec}(BXC)=(C^\top\otimes B)\operatorname{vec}(X)$ with $A,G$ symmetric gives $F_W^{-1}\operatorname{vec}(\nabla_W\mathcal{L})=\operatorname{vec}(G^{-1}(\nabla_W\mathcal{L})A^{-1})$.
\end{proof}
\fix{The previous proof moved the double sum through $\otimes$ ``by bilinearity'' \emph{before} establishing that one factor is constant in $(t,s)$. That step is invalid in general: $\sum_{t,s}A_{ts}\otimes G_{ts}\neq(\sum_{t,s}A_{ts})\otimes(\sum_{t,s}G_{ts})$. It is Assumption~\ref{as:rate}, not bilinearity, that does the work, which is why the assumption cannot be presented as a consequence of i.i.d.\ sampling. The old proof also asserted $F_W^{-1}=(A\otimes G)^{-1}$ with no invertibility condition, while $A=\mathbb{E}[rr^\top]$ is routinely rank-deficient for sparse spike inputs; the inverse exists only after damping (Remark~\ref{rem:damping}).}

\fix{The update was written $(G+\epsilon I)^{-1}\nabla(A+\epsilon I)^{-1}$ and presented as the inverse of the damped Fisher, which it is not; undamped $\epsilon$ split evenly across factors also makes the effective damping scale-dependent, since $A\otimes G$ is invariant to $A\to cA,\,G\to G/c$ but the damped product is not.}

\begin{remark}[Scope of the derivation]
\label{rem:conv}
Lemma~\ref{lem:fim-approx} is derived for a fully-connected weight $W$ shared over time. For the conv-layers used in Sec.~\ref{sec:experiment}, $x_t$ is understood as the patch-extracted input and the expectations additionally average over spatial locations, i.e.\ the spatially-uncorrelated-derivatives approximation of KFC is assumed on top of Assumptions~\ref{as:block}--\ref{as:rate}.
\end{remark}
\fix{Missing entirely: the derivation was stated for a dense $W$, yet every architecture evaluated (S-LeNet5, S-VGG11/16, S-ResNet18) is convolutional, so an extra approximation is in force that the paper never declared.}

\fix{Algorithm changes: (i) it was named SpikeFAC in the caption while the text, table and abstract say \approach; (ii) $\hat A$ was formed as an explicit $\sum_{t}\sum_{s}S_tS_s^{\top}$, i.e.\ $B\tau^2$ outer products, which is algebraically the same matrix as the rank-one form above but contradicts the $O(B\tau(m^2+n^2))$ figure claimed in Thm.~\ref{thm:complexity}; (iii) the inverses were recomputed every step, whereas the cost analysis assumes amortization over $T_{\mathrm{inv}}$; (iv) the $\delta_t$ fed to $\hat G$ came from the true-label loss (see Def.~\ref{def:fisher}).}

\begin{theorem}[\approach~ complexity]
\label{thm:complexity}
Let layer $\ell$ have input/output dimensions $m_\ell,n_\ell$, let $N_\ell=m_\ell n_\ell$ and $N=\sum_{\ell=1}^L N_\ell$, with batch size $B$ and $\tau$ timesteps. One \approach~ update costs
\begin{equation}
    O\Big(\sum_{\ell=1}^L\Big[B\tau(m_\ell{+}n_\ell)+B(m_\ell^2{+}n_\ell^2)+\tfrac{m_\ell^3+n_\ell^3}{T_{\mathrm{inv}}}+N_\ell(m_\ell{+}n_\ell)\Big]\Big).
\end{equation}
The curvature terms satisfy $\sum_\ell(m_\ell^3+n_\ell^3)$

$\le 2\sum_\ell N_\ell^3/\min(m_\ell,n_\ell)^3\le 2N^3/\min_\ell\min(m_\ell,n_\ell)^3$, so inverting the Kronecker factors is cheaper than the $O(N^3)$ inversion of the full Fisher by at least a factor $\tfrac12\min_\ell\min(m_\ell,n_\ell)^3$; for balanced layers ($m_\ell\asymp n_\ell$) the per-layer curvature cost is $O(N_\ell^{3/2})$ instead of $O(N_\ell^{3})$.
\end{theorem}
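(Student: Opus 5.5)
The plan is to bound the cost of one update layer by layer, summing over $\ell$ at the end, and splitting each layer's work into the four phases an update performs. The per-layer shapes fix the conventions: $x_t\in\mathbb{R}^{m_\ell}$, $\delta_t\in\mathbb{R}^{n_\ell}$, $A\in\mathbb{R}^{m_\ell\times m_\ell}$ and $G\in\mathbb{R}^{n_\ell\times n_\ell}$. Following the algorithm fixes, the estimators are formed in their rank-one form, so that $\hat A$ and $\hat G$ each use one outer product per sample. The inverses are refreshed only every $T_{\mathrm{inv}}$ steps. The forward and backward passes that produce $x_t$ and $\delta_t$ are charged to ordinary training and are not counted.

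The four phases are costed as follows.

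\emph{Temporal aggregation.} For each of the $B$ samples, accumulate $r=\frac1\tau\sum_t x_t$ and $\sum_t\delta_t$. This costs $O(B\tau(m_\ell+n_\ell))$.

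\emph{Factor statistics.} Forming $rr^\top$ and $(\sum_t\delta_t)(\sum_t\delta_t)^\top$ per sample and averaging over the batch costs $O(B(m_\ell^2+n_\ell^2))$. The point to stress is that Assumption~\ref{as:rate} and the bilinearity step in Lemma~\ref{lem:fim-approx} collapse the $\tau^2$ pairs into a single outer product. This is what removes the $B\tau^2$ factor mentioned in the algorithm note.

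\emph{Inversion.} Inverting the damped $m_\ell\times m_\ell$ and $n_\ell\times n_\ell$ factors (Cholesky or eigendecomposition) costs $O(m_\ell^3+n_\ell^3)$. Since this is done once every $T_{\mathrm{inv}}$ steps, the amortized per-step cost is $O((m_\ell^3+n_\ell^3)/T_{\mathrm{inv}})$.

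\emph{Preconditioning.} By Lemma~\ref{lem:fim-approx} the update is $G^{-1}(\nabla_W\mathcal L)A^{-1}$. The product of an $n_\ell\times n_\ell$ matrix with an $n_\ell\times m_\ell$ matrix costs $n_\ell^2m_\ell=N_\ell n_\ell$, and the second product costs $N_\ell m_\ell$, giving $O(N_\ell(m_\ell+n_\ell))$. Summing the four phases over $\ell$ gives the displayed bound.

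For the comparison, write $c_\ell=\min(m_\ell,n_\ell)$ and $M_\ell=\max(m_\ell,n_\ell)$. Then $N_\ell=c_\ell M_\ell$, so $M_\ell=N_\ell/c_\ell\le N_\ell^3/c_\ell^3$ whenever $c_\ell\ge1$ and $N_\ell\ge1$. More simply, $M_\ell^3=N_\ell^3/c_\ell^3$. It follows that $m_\ell^3+n_\ell^3\le 2M_\ell^3=2N_\ell^3/c_\ell^3$. Next, bound $c_\ell\ge c:=\min_\ell c_\ell$ and use superadditivity of cubes over nonnegative terms, $\sum_\ell N_\ell^3\le(\sum_\ell N_\ell)^3=N^3$. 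This yields $\sum_\ell(m_\ell^3+n_\ell^3)\le 2N^3/c^3$, and hence the ratio against $O(N^3)$ is at least $\tfrac12 c^3$. In the balanced case $m_\ell\asymp n_\ell\asymp N_\ell^{1/2}$, so $m_\ell^3+n_\ell^3=O(N_\ell^{3/2})$.

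The main obstacle is not the algebra but justifying the statistics phase. One must check that the estimator actually used equals $\mathbb{E}[rr^\top]$ in rank-one form, so that no $\tau^2$ term appears. One must also state explicitly that damping (Remark~\ref{rem:damping}) does not change the asymptotic inversion cost. For convolutional layers (Remark~\ref{rem:conv}) the batch term would acquire a spatial-location factor. I would note this as a caveat, or absorb it into $B$ by treating each location as a sample.
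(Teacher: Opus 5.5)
Your proposal is correct and follows the paper's proof essentially step for step. You use the same four per-layer phases with the same costs, summed over $\ell$, and the same comparison via $m_\ell^3+n_\ell^3\le 2\max(m_\ell,n_\ell)^3=2N_\ell^3/\min(m_\ell,n_\ell)^3$ followed by $\sum_\ell N_\ell^3\le N^3$. One minor nuance: collapsing the $\tau^2$ outer products to one per sample is the exact identity $\sum_{t,s}x_tx_s^\top=\tau^2rr^\top$ (and likewise for the $\delta_t$), so the cost bound needs no appeal to Assumption~\ref{as:rate}; that assumption only makes the Fisher a single Kronecker product.
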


\begin{proof}
Per layer: $R^{(\ell),b},D^{(\ell),b}$ cost $O(B\tau(m_\ell+n_\ell))$; the two rank-one accumulations cost $O(B(m_\ell^2+n_\ell^2))$; inverting (or eigendecomposing) $A^{(\ell)},G^{(\ell)}$ costs $O(m_\ell^3+n_\ell^3)$ once every $T_{\mathrm{inv}}$ steps; the two matrix products forming $\Delta\theta^{(\ell)}$ cost $O(m_\ell n_\ell(m_\ell+n_\ell))$. Summing over $\ell$ gives the bound. For the comparison, $m^3+n^3\le 2\max(m,n)^3=2(mn)^3/\min(m,n)^3$ for any $m,n\ge1$, and $\sum_\ell N_\ell^3\le(\sum_\ell N_\ell)^3=N^3$ since the $N_\ell$ are non-negative. If $m_\ell=n_\ell=d_\ell$ then $N_\ell=d_\ell^2$ and $m_\ell^3+n_\ell^3=2N_\ell^{3/2}$.
\end{proof}
\fix{The old proof argued $N=\sum_\ell m_\ell n_\ell\ \ge\ \sum_\ell\max(m_\ell,n_\ell)^2\ \ge\ \sum_\ell(m_\ell^3+n_\ell^3)^{2/3}$. The first inequality is backwards ($m_\ell n_\ell\le\max(m_\ell,n_\ell)^2$) and the second is not a valid comparison at all; the chain also concluded nothing about $N^3$ versus the stated bound. The cost figure itself omitted the $O(N_\ell(m_\ell+n_\ell))$ preconditioning products and the $T_{\mathrm{inv}}$ amortization, and its ``$O(B\tau(m^2+n^2))$'' was inconsistent with the $O(\tau^2)$ form of $\hat A$ in the algorithm.}

\section{Experiment}
\label{sec:experiment}

\noindent\textbf{Baselines.} We compare \approach~ against several popular optimizers in SNN optimization, such as SGD with momentum~\cite{robbins1951stochastic}, Adam~\cite{kingma2014adam}, and AdamW~\cite{loshchilov2017decoupled}.

\begin{figure}[t]
    \centering
    \includegraphics[width=0.8\linewidth]{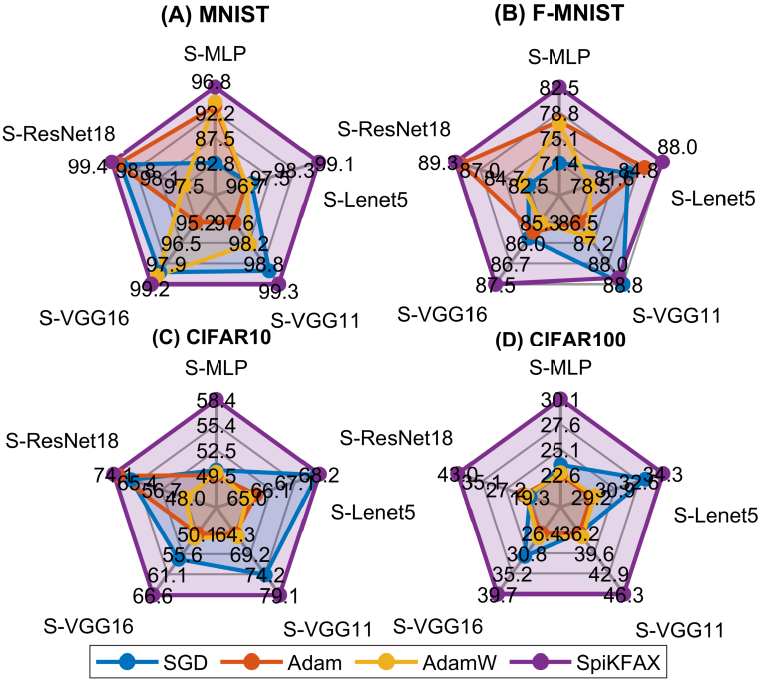}
    \vspace{-4mm}\caption{Performance comparison of different optimizers and models across static image datasets.}
    \label{fig:static_data}
\end{figure}

\noindent\textbf{Datasets and model architectures.} We study our proposed optimizer over diversed datasets such as $\textit{Static Datasets}$ (MNIST~\cite{deng2012mnist}, F-MNIST~\cite{xiao2017fashion}, CIFAR10~\cite{krizhevsky2009learning}, CIFAR100)~\cite{krizhevsky2009learning}, $\textit{Neuromorphic}$ $
\textit{Datasets}$ (N-MNIST~\cite{orchard2015converting}, CIFAR10-DVS~\cite{li2017cifar10}, DVS128 Gesture). Several architectures are exploited, such as spiking MLP, spiking LeNet5~\cite{lecun1998gradient}, spiking VGG11, spiking VGG16~\cite{simonyan2014very}, and spiking ResNet18~\cite{he2016deep}.

\noindent\textbf{Training settings.} Each experiment is conducted five times, and the mean values are reported. For each optimizer, we vary the values of the learning rate and choose the best setting. We conduct experimental implementation using the SNNTorch framework on a 64 GB RAM, 20-core CPU @NVIDIA RTX A5000 PC. 
Our code is available in our \href{https://github.com/ngocphucck/SpiKFAX}{repository}.

\noindent\textbf{Generalization comparison.} Across multiple datasets and model architectures, our results demonstrate that \approach~ significantly outperforms baseline optimizers in terms of testing accuracy. Specifically, Table \ref{tab:neuromorphic_results} compares the generalization of models optimized by different methods on three neuromorphic datasets: N-MNIST, CIFAR10-DVS, and DVS128-Gesture. On each dataset, the highest accuracy model is consistently the one trained with \approach, achieving $99.92\%$, $56.15\%$, and $76.13\%$, respectively. The generalization gains are substantial across both datasets and architectures: for instance, S-VGG11 trained with \approach~ improves testing accuracy by 1.87\% over the same architecture trained AdamW, while the improvements on CIFAR10-DVS and DVS128-Gesture are even larger, reaching at least 3.5\% and 6.43\%. This generalization benefit is extended to the traditional static image datasets as well, including MNIST, F-MNIST, CIFAR10, and CIFAR100, as shown in Fig. \ref{fig:static_data}. Across all five model architectures evaluated, training with \approach~ consistently yields a significant margin in testing accuracy over other optimizers. 

\begin{figure}[t]
    \centering
    \includegraphics[width=\linewidth]{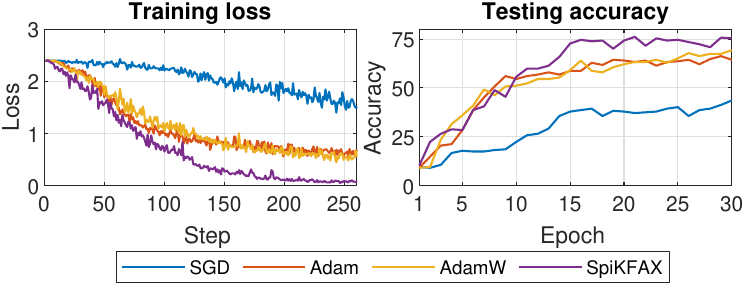}
    \vspace{-6mm}\caption{Training loss (over steps) and testing accuracy (over epochs) of S-VGG11 on the DVS128-Gesture dataset.}
    \label{fig:loss}
\end{figure}

\noindent\textbf{The stability of \approach.} Fig. \ref{fig:loss} shows the training loss and the testing accuracy of S-VGG11 across different optimizers. \approach~ drives a dramatically steeper reduction in the training loss, a trend that becomes clearly visible from the $75^{th}$ step onward. Moreover, the model trained with \approach~ exhibits some fluctuation at early steps before stabilizing, while models trained with other optimizers remain unstable throughout training. In terms of testing accuracy, the model optimized with \approach~ rapidly reaches its peak accuracy of 76.13\% by the $15^{th}$ epoch. In contrast, models trained with Adam and AdamW improve more gradually and plateau at lower accuracies of 67.42\% and 69.7\%, respectively, while the model trained with SGD performs worst overall. These results show that \approach~ not only stabilizes training but also achieves superior generalization.

\noindent\textbf{Ablation study.} We study the effect of the learning rate on the performance of models trained with different optimizers (c.f. Fig. \ref{fig:lr} A). When varying the learning rate, the model optimized by \approach~ stays stable with high accuracy. The generalization of the model trained by \approach~ consistently performs better than optimization by other methods across different learning rates. 

\noindent\textbf{Training time.} We measure the training time of \approach~ and compare it against other optimizers, including SGD, Adam, and AdamW (Fig. \ref{fig:lr} B). For each iteration, our algorithm is slower than SGD, Adam, and AdamW by 1.4x, 1.2x, and 1.2x, respectively. However, the exact Fisher information computation-based optimization is intractable even for the MLP model, i.e., the running time tends to infinity. Our approximation algorithm reduces the running time significantly and is comparable to second-order optimizers such as Adam or AdamW, while achieving higher generalization.

\begin{figure}[t]
    \centering
    \includegraphics[width=\linewidth]{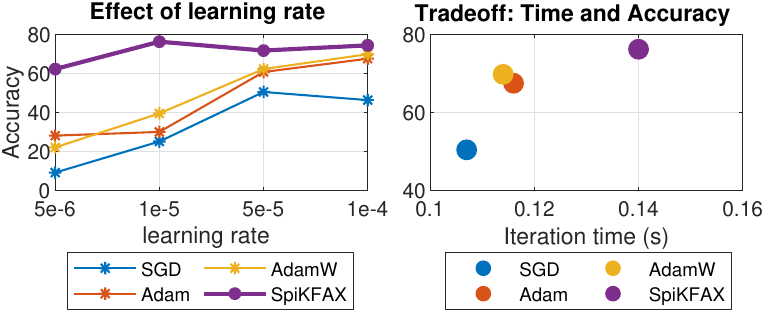}
    \vspace{-6mm}\caption{Effect of learning rate (left) and training time (right) of S-VGG11 on the DVS128-Gesture dataset.}
    \label{fig:lr}
\end{figure}

\section{Related Work}
\label{sec:related_works}

Several works design optimization algorithms specifically for the spiking setting. For instance, the paper \cite{windhager2026linearized} pairs Linearized Bregman Iterations with AdaBreg, a Bregman variant of Adam, to enforce weight sparsity during training. Other works target the sharpness of the SNN loss landscape directly, applying sharpness-aware minimization \cite{nicholson2026sharpnessawaresurrogatetrainingonsensor} to encourage flatter, better-generalizing solutions. A separate line departs from backpropagation-through-time altogether, replacing it with local or feedback-driven learning rules; a feedback control optimizer \cite{saponati2025feedback}, for instance, integrates spike-based local learning with control signals for online, on-chip training without storing intermediate states. These works improve SNN training via the objective function, sparsity, or the credit-assignment mechanism, but none exploit curvature information through a Kronecker-factored Fisher approximation which is the gap \approach~ addresses.

\section{Conclusion}
\label{sec:conclusion}

We present \approach, a second-order optimizer adapting K-FAC to spiking neural networks by deriving a Kronecker-factored Fisher approximation tailored to their time-recurrent, surrogate-gradient dynamics while remaining computationally tractable. \approach~ shows that the sharp loss landscape long blamed for slow, poorly generalizing SNN training can be addressed directly through tractable second-order curvature approximation, rather than only through indirect fixes like sharpness perturbation or sparsity constraints.

\newpage
\section{Acknowledgement}
This work is partially funded by the UK Government through the New Deal for Northern Ireland. The funding is delivered on behalf of the Northern Ireland Office and the Department for Digital, Culture, Media and Sport by Innovate UK. It is in part  supported by the CHIST-ERA grant through EPSRC Grant EP/Y03631X/1. 
\bibliographystyle{IEEEbib}
\bibliography{refs}

\end{document}